%% file: main.tex
\documentclass{article}
\usepackage[preprint]{neurips_2026}
\usepackage[utf8]{inputenc}
\usepackage[T1]{fontenc}
\usepackage{amsmath,amssymb,amsthm}
\usepackage{graphicx,booktabs,array,longtable,tabularx,calc}
\usepackage{microtype}
\usepackage{float,needspace}
\usepackage{url}
\usepackage[colorlinks=true,linkcolor=blue,citecolor=blue,urlcolor=blue]{hyperref}
\setcitestyle{authoryear,round}
\newtheorem{proposition}{Proposition}

\newtheorem{corollary}{Corollary}
\theoremstyle{definition}

\newcommand{\E}{\mathbb E}

\title{The Value of Memory in Language Agents}
\author{%
\begin{tabular}{@{}ccc@{}}
Zehao Lin$^{1}$ & Erenkai Zhu$^{2}$ & Helan Hu$^{1}$ \\
{\normalfont\small\texttt{linzh@memtensor.cn}} & {\normalfont\small\texttt{erz21@sjtu.edu.cn}} & {\normalfont\small\texttt{huhl@memtensor.cn}} \\[0.6em]
Xixuan Hao$^{1,3}$ & Chunyu Li$^{1}$ & Kai Chen$^{1}$ \\
{\normalfont\small\texttt{xhao390@connect.hkust-gz.edu.cn}} & {\normalfont\small\texttt{licy@memtensor.cn}} & {\normalfont\small\texttt{chenk@memtensor.cn}}
\end{tabular}\\[0.6em]
{\normalfont$^{1}$MemTensor\qquad $^{2}$Shanghai Jiaotong University}\\
{\normalfont$^{3}$The Hong Kong University of Science and Technology (Guangzhou)}
}
\hypersetup{pdfauthor={Zehao Lin; Erenkai Zhu; Helan Hu; Xixuan Hao; Chunyu Li; Kai Chen},pdftitle={The Value of Memory in Language Agents}}
\begin{document}
\raggedbottom
\maketitle
\begin{abstract}
\input{sections/00-abstract}
\end{abstract}
\input{sections/body}
\begingroup
\small
\setlength{\bibsep}{0pt}
\bibliographystyle{plainnat}
\bibliography{references}
\endgroup
\bigskip
\appendix
\input{sections/appendix}
\end{document}

%% file: sections/00-abstract.tex
Memory enables language agents to carry useful distinctions from past experience into future decisions. Its value depends on which distinctions survive, whether a bounded agent can use them, and what can be recovered or revised as demands evolve. We develop a decision-theoretic framework that follows this value from raw history through persistent state, operational use, and costly recovery and revision. Original and retained value are expressed in units of avoided task loss; risk--cost frontiers describe attainable recovery and preservation across uses. Unknown demand yields value profiles, while interactive feedback changes the distribution of future work. A cost-aware simulation condition and five minimal counterexamples expose the roles of information timing, retrieval access, component interaction, feedback and demand uncertainty. In particular, a pointer with zero mutual information about the answer can still reduce the cost of exact recovery. Diagnostics on LoCoMo, a two-project update task and longitudinal cases exhibit failures of preservation, use and measurement. At fixed retrieved evidence, a saved ledger changes subsequent answers; at fixed agent outputs, an evaluator revision changes scores. The framework connects these distinctions to operational risk bounds and testable principles for allocating memory, preserving recovery options and revising evidence dependencies.

%% file: sections/body.tex
\section{Memory as a resource for future decisions}\label{sec:intro}

Language agents operate across conversations, projects and changing objectives. Their memory carries decisions, plans, evidence and constraints from earlier work into requests that may not yet be known. Consider an assistant maintaining two projects: an audit in the first has just finished, while a report in the second still contains a valid citation and a separate citation requiring correction. Updating the first project's plan should preserve the distinctions needed to resume the second. What matters is the future work that this history makes possible.

This raises a valuation problem. How much decision value was available in the original history? How much survives in the stored state, and how much can a bounded agent actually use? When future demands change, what additional value can be recovered or created through revision, at what cost, and with what effects on other uses? These questions connect memory capacity, retrieval, continual updating and evaluation.

\textbf{Our thesis is that the value of memory lies in its contribution to future decisions, jointly with the cost of preserving, accessing and revising that contribution.} A stored state can carry relevant information that its reader fails to use. An address can accelerate recovery while carrying no direct information about the answer. A revision can improve one task and damage another. A useful account of memory value must describe these possibilities together.

We develop a decision-theoretic framework that connects memory's value across storage, use, recovery and revision (Figure~\ref{fig:framework}). This connection makes compression, retrieval and updating analyzable as ways of allocating decision capability across time and uses. Our contribution has three parts. First, we express original, retained and realized value in a common unit of avoided task loss, and relate the gap between retained and realized value to the reader's implementation regret at both memory and baseline states. Second, we connect these values to resource use through recovery and revision frontiers, cost-aware simulation and explicit models of unknown demand and interactive feedback. Third, we turn the account into guidance for design and measurement: operational risk bounds support evaluation, and three theory-grounded hypotheses address what to retain, which recovery options to preserve and how to revise dependent plans.

Five minimal counterexamples establish why information timing, access, component interaction, feedback and demand can change value comparisons. A LoCoMo retention--recovery study, a two-project update task and longitudinal cases demonstrate the framework's diagnostic use. They distinguish preserved evidence from successful use, quantify an observed recovery benefit together with its resource costs, and separate changes in agent behavior from changes in measurement.

\begin{figure}[t]
\centering
\includegraphics[width=\linewidth]{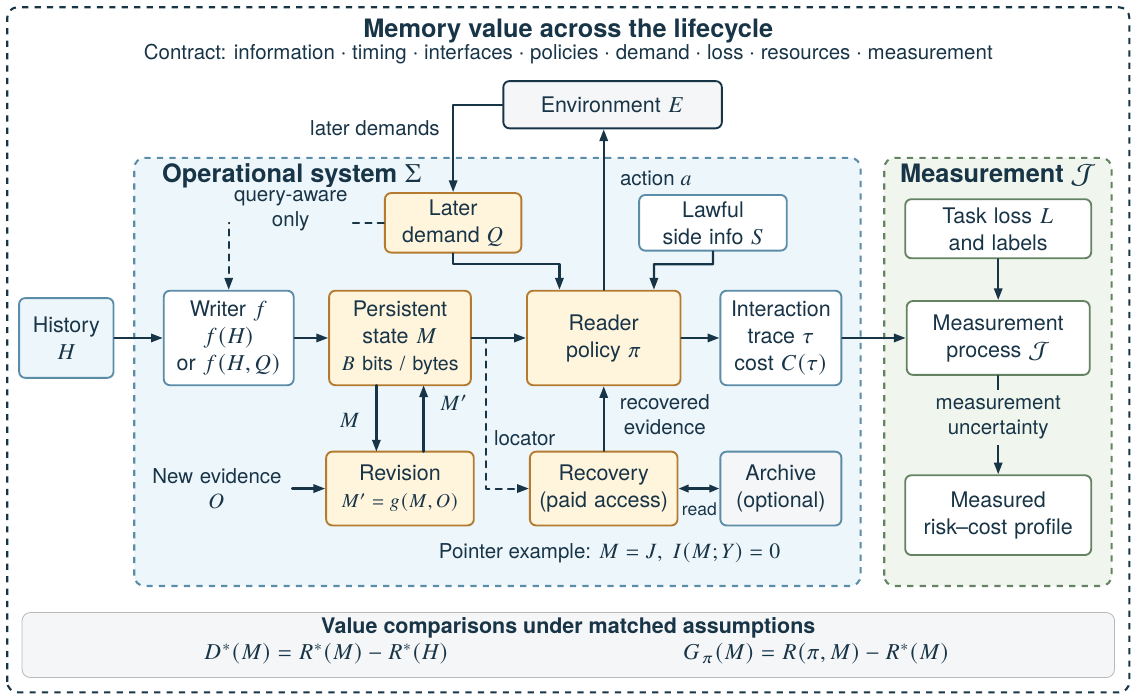}
\caption{Memory value across information, operation and measurement. The operational system \(\Sigma\) includes writing, storage, recovery, revision and reading. Its actions affect the environment and later demands; \(\mathcal J\) measures the resulting trajectories under a task loss and reference labels. The outer boundary records comparison conditions. The dashed query input denotes the query-aware alternative. The lower strip compares potential risks under matched assumptions; \(O\) denotes new revision evidence.}
\label{fig:framework}
\end{figure}

\section{Foundations of memory value}\label{sec:related}

Classical comparison of statistical experiments provides the starting point: information has value relative to a decision problem and the actions it enables \citep{blackwell1951comparison}. Simulation explains why a decision maker with access to richer information can preserve the performance of a less informed one. For an implemented memory system, the relevant question is whether that simulation respects its information timing, computational constraints and access costs.

Several established lines of work make these restrictions explicit. Predictive \(\mathcal V\)-information measures information through an allowed prediction family and optimized log loss \citep{xu2020usable}. Action-dependent side information connects information acquisition to its cost \citep{weissman2009side}. Prompt-compression theory distinguishes query-aware and query-agnostic encoders \citep{nagle2024compression}. These foundations motivate separate accounting for stored content, feasible computation and paid access.

Memory architectures make these choices concrete. Memory$^3$ formulates memory-format choice as writing cost plus expected reuse-weighted reading cost, and implements explicit memory as sparse key-values \citep{yang2024memory3}. Metis and MemT-1.0 maintain native memory states that evolve within model computation \citep{zhang2026metis,memtensor2026memt1}. MemOS organizes plaintext, activation and parameter memories through lifecycle operations and MemCubes carrying provenance and version metadata \citep{li2025memos}.

At the agent level, Generative Agents connects recorded experience and reflection to planning \citep{park2023generative}; MemGPT manages transfers across memory tiers \citep{packer2023memgpt}; A-Mem evolves linked notes \citep{xu2025amem}; and Mem0 extracts, consolidates and retrieves conversational memories \citep{chhikara2025mem0}. CoALA organizes memory modules, internal and external actions, and a recurring decision process \citep{sumers2024coala}. These are architectural precedents for the components whose retained and realized decision value we study under explicit information and resource conditions.

Recent memory research gives concrete evaluation models for these questions. DeMem formulates decision distortion and an implementation-error bound with a query-aware encoder \citep{zou2026demem}. \textsc{MemAudit} defines a finite package optimum under representation, evidence and cost choices \citep{bhargava2026memaudit}. VitaBench evaluates lifecycle decisions over a horizon and separately studies a rollback extension \citep{yu2026vitabench}; MemOps represents lifecycle operations, target binding and state transitions \citep{hao2026memops}. Update-sufficiency auditing distinguishes content, delivery and scoring conventions \citep{zhang2026correct}. These studies specify particular stages or uses of memory. Transferring conclusions between their settings requires the comparison conditions in Section~\ref{sec:contract}.

A cross-layer compaction perspective connects KV caches, prompts, bounded states and agent stores through rate--distortion trade-offs \citep{colaco2026compaction}. Session-handover theory uses statistical sufficiency and comparison of experiments to distinguish capacity, writer and continuation effects, including the cost of committing before a future query is known \citep{kato2026handover}. These results provide direct foundations for following value through storage, use and later information acquisition.

Preservation and validity also have direct precedents. WISE evaluates unrelated behavior after edits \citep{wang2024wise}. A consolidation study finds that its proposed memory-quality score fails to establish prediction of external transfer performance \citep{annapureddy2026epistemics}. Partial identification formalizes values left undetermined by evidence \citep{manski2024identification}, while performative prediction models decisions that change the future evaluation distribution \citep{perdomo2020performative}. We use these perspectives to include uncertainty and feedback within memory evaluation. Rational metareasoning supplies a decision-theoretic account of paid computation \citep{hay2012computations}; irreversible-decision theory values options retained until later information arrives \citep{arrow1974irreversibility}; truth maintenance records justifications and their dependencies \citep{doyle1979truth}. These foundations guide the constructive directions in Section~\ref{sec:implications}.

\section{From history to retained and realized value}\label{sec:value}

\subsection{Potential, retained and realized value}\label{sec:value-levels}

Let \(H\) be the writer's history, \(M=f(H)\) the persistent state, \(Q\) a later demand, \(Y\) the task-relevant world state, and \(S\) the reader's lawful side information. For demand law \(P\), loss \(\ell\), and policy class \(\Pi\), define
\begin{equation}
R^*(M;P,\ell,\Pi)
=\inf_{\pi\in\Pi}\E_P\!\left[\ell\!\left(Y,Q,\pi(M,S,Q)\right)\right].
\label{eq:potential}
\end{equation}
The policy class specifies whose decisions the representation can support. An unrestricted class gives a potential-information reference; a constrained class gives a notion of usable information. A particular deployed reader has its own risk \(R(\pi,M)\).

Under common side information and policy permissions, incremental value relative to an empty memory is
\begin{equation}
V^*(M)=R^*(\varnothing)-R^*(M).
\label{eq:value}
\end{equation}
The same baseline defines original information value \(V^*(H)\). The value lost between history and stored state is their decision-risk difference:
\begin{equation}
\begin{aligned}
D^*(M)&=R^*(M)-R^*(H),\\
V^*(H)&=V^*(M)+D^*(M).
\end{aligned}
\label{eq:value-accounting}
\end{equation}
All three quantities use the common loss unit. Under the simulation and optional-ignorance conditions in Appendix~\ref{app:information}, original value is at least retained value, and both are nonnegative. These conditions specify when the history can reproduce memory-based decisions within the allowed resources.

An implemented reader realizes a further, distinct comparison:
\[
V_\pi(M)=R(\pi,\varnothing)-R(\pi,M).
\]
This effect can be positive, zero or negative because the fixed reader may use the state imperfectly. Thus potential value, actual performance and resource use provide complementary design information. A full-history LLM measures \(R(\pi,H)\); a selected evidence packet tests use with that support supplied; a package optimum ranges over its declared candidates.

For an admissible reader, define its implementation regret as
\(G_\pi(M)=R(\pi,M)-R^*(M)\geq0\). The incremental-value difference also depends on how well that reader uses the empty baseline:
\begin{equation}
V^*(M)-V_\pi(M)=G_\pi(M)-G_\pi(\varnothing).
\label{eq:regret-gap}
\end{equation}
Thus \(G_\pi(M)\) measures the unrealized capability at \(M\), while the value difference compares two implementation gaps. Appendix~\ref{app:operational} gives an explicit case in which that difference is negative.

\subsection{The system that uses the representation}

The operational system \(\Sigma\) includes writing, storage, indexing, recovery, revision, reading and failure handling. In a finite-horizon environment \(E\), it induces a trajectory \(\tau\), risk and cost:
\begin{equation}
R(\Sigma;E)=\E_{P_{\Sigma,E}}[L(\tau)],
\qquad C(\tau)\in\mathbb R_+^d.
\end{equation}
The trajectory includes user clarifications, failed calls, retries and delivered actions. Its distribution can depend on the system: an omitted constraint may cause a follow-up request, and a larger context may increase latency or delivery failure.

A marginal memory comparison specifies the intervention, for example \(\Sigma(M_1,K)\) versus \(\Sigma(M_0,K)\), with a common reader, access policy and environment rule \(K\). This supports a conditional effect claim. Attribution to storage, retrieval or reading requires additional controls because these components can interact.

\section{Recovery and revision under evolving demands}\label{sec:revision}

Recovery and revision extend the decisions an agent can support over time. Reading an archive exposes historical information through a paid interface, while checking a live resource, a changed permission or a new user requirement can supply observations beyond the old history. Recording each observation's source and time therefore distinguishes restored historical support from adaptation to an evolving world.

For a persistent revision \(M'=g(M,O)\), a useful outcome is
\[
\bigl(R_T(M'),\ R_U(M')-R_U(M),\ c\bigr),
\]
where \(T\) is the target use, \(U\) a finite set of other uses, and \(c\) the revision cost. We aggregate the other uses as \(R_U(M)=\sum_{u\in U}w_uR_u(M)\), with \(w_u\geq0\) and \(\sum_{u\in U}w_u=1\). The weights and each use's demand law and loss are fixed before comparison, with losses expressed in a common unit. Fix a cost functional \(g\mapsto C_g\), such as expected resource use or a componentwise almost-sure budget bound. For a repair family \(\mathcal G\), target-risk threshold \(\epsilon\), and preservation tolerance \(\eta\), define
\begin{equation}
\begin{split}
\mathcal K(\epsilon,\eta)=\{\,c:\ \exists g\in\mathcal G,\quad
&R_T(M'_g)\leq\epsilon,\\[-2pt]
&R_U(M'_g)-R_U(M)\leq\eta,\quad C_g\preceq c\,\}.
\end{split}
\label{eq:frontier}
\end{equation}
The nondominated boundary of this set is the correction-cost frontier. Per-use preservation replaces the aggregate constraint by \(R_u(M'_g)-R_u(M)\leq\eta_u\) for every \(u\in U\). A robust variant requires the risk constraints to hold uniformly over a specified family of demand laws. Tested strategies establish attainable points; the full frontier depends on the available policies and interfaces.

The units follow the underlying quantities. Fixed-length capacity is measured in bits; implementations can report bytes and their encoding. Decision value is measured in avoided loss, with error probability as one possible loss. Archive reads, model tokens, time and user interactions remain separate cost coordinates until an explicit pricing or preference rule combines them. This makes a capacity--accuracy curve a statement about a specified writer, reader, demand family and recovery allowance.

Unknown future demand changes the valuation itself. For an admissible set of exogenous demand laws \(\mathcal P\), retain the profile \(P\mapsto V^*(M;P)\); choosing a mixture or a worst-case criterion adds a decision preference. When the agent's actions shape later requests, the operational model instead uses the induced trajectory law \(P_{\Sigma,E}\) under a common environment rule \(E\). Both forms of uncertainty affect which capabilities are worth preserving now and which can be reacquired later.

Failures also affect attainable value. With delivery-failure probability \(p_F\), delivered risk \(r\), and specified failure loss \(\rho\),
\[
R=(1-p_F)r+p_F\rho.
\]
Retries and auxiliary computation contribute to the same trajectory's cost, bringing the method's decision benefit and operational burden into a common account.

\section{How information, access and demand shape value}\label{sec:counterexamples}

Policy simulation identifies when a change of representation or interface preserves attainable decision value. A comparison contract collects its information timing, interfaces, feasible policies, loss and resource conditions (Table~\ref{tab:contract}). Consider contracts \(\mathcal C_0\) and \(\mathcal C_1\) over the same environment family. Let \(Z_i\) be the externally consequential interaction trace and \(C_i\) its cost vector. Feasible policies respect a hard budget almost surely in every allowed environment.

\Needspace{12\baselineskip}
\begin{proposition}[Cost-aware simulation]\label{prop:simulation}
Suppose every budget-\(b\) policy under \(\mathcal C_0\) has a causal simulator under \(\mathcal C_1\) that, in every allowed environment, preserves the distribution of \(Z_0\) and has cost at most \(C_0+\delta\) under a coupling. For nonempty feasible policy classes and a common bounded loss,
\[
R^*_{\mathcal C_1}(b+\delta;P,L)\leq R^*_{\mathcal C_0}(b;P,L)
\]
for every declared distribution \(P\) over environments.
\end{proposition}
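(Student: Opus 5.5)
The plan is to show that every feasible policy in the risk minimization defining \(R^*_{\mathcal C_0}(b;P,L)\) is matched by a feasible policy in the minimization defining \(R^*_{\mathcal C_1}(b+\delta;P,L)\) with the same risk, and then pass to the infimum. Fix a declared distribution \(P\) over environments and any policy \(\pi_0\) that is budget-\(b\) feasible under \(\mathcal C_0\); such a policy exists because the feasible class is assumed nonempty. By hypothesis there is a causal simulator \(\pi_1\) under \(\mathcal C_1\) such that, for every allowed environment \(e\), the law of \(Z_1\) under \((\pi_1,e)\) equals the law of \(Z_0\) under \((\pi_0,e)\). Moreover, there is a coupling under which \(C_1\preceq C_0+\delta\) almost surely.

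\textbf{Step 1: feasibility of the simulator.} Since \(\pi_0\) respects the hard budget, \(C_0\preceq b\) almost surely in each environment. Under the coupling, \(C_1\preceq C_0+\delta\preceq b+\delta\) almost surely. Because \(C_1\) is a function of the \(\mathcal C_1\)-side randomness alone, this almost-sure bound on the coupled space transfers to the marginal law of \(C_1\). Hence \(\pi_1\) is feasible for budget \(b+\delta\) under \(\mathcal C_1\) in every allowed environment. Causality of the simulator is what makes \(\pi_1\) an admissible policy under \(\mathcal C_1\): it respects that contract's information timing and interfaces. I take this as part of the hypothesis rather than something to prove.

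\textbf{Step 2: equality of risks.} The common loss \(L\) is a function of the externally consequential trace, so \(L(Z_1)\) and \(L(Z_0)\) have the same law in each environment \(e\). Boundedness of \(L\) makes both conditional expectations finite. Integrating over \(e\sim P\), which is legitimate by boundedness and measurability of \(e\mapsto\E_e[L]\), gives \(R_{\mathcal C_1}(\pi_1;P)=R_{\mathcal C_0}(\pi_0;P)\). Therefore
\[
R^*_{\mathcal C_1}(b+\delta;P,L)\le R_{\mathcal C_0}(\pi_0;P).
\]
Taking the infimum over \(\pi_0\) yields the claim; nonemptiness of both classes ensures the infima are not \(+\infty\) vacuously.

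\textbf{Main obstacle.} The delicate point is the first step: making sure that "cost at most \(C_0+\delta\) under a coupling" really delivers an almost-sure budget guarantee for \(\pi_1\) alone, and that it does so uniformly over every allowed environment, as the definition of feasibility requires. I would handle this by noting that the event \(\{C_1\not\preceq b+\delta\}\) has the same probability under the coupling as under \(\pi_1\)'s own law. On the coupled space this event is contained in \(\{C_0\not\preceq b\}\), which is null. A secondary issue is that the loss must depend on the trajectory only through \(Z\). I would state explicitly that the "common loss" is measurable with respect to the externally consequential trace, since otherwise preserving the law of \(Z\) would not suffice.
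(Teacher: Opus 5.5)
Your proof is correct and is essentially the paper's argument: you map each budget-\(b\) policy under \(\mathcal C_0\) to its simulator, note that it is feasible at budget \(b+\delta\) and has the same risk under \(P\) because the loss depends only on the preserved law of \(Z\), and pass to the infimum directly over all \(\pi_0\), where the paper instead picks an \(\epsilon\)-optimal policy and lets \(\epsilon\downarrow0\). Your extra step, turning the coupled bound \(C_1\preceq C_0+\delta\) into an almost-sure budget guarantee for \(\pi_1\) alone (the violation event lies in \(\{C_0\not\preceq b\}\) up to the null set where the coupling inequality fails), fills in a point the paper asserts without comment.
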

\begin{proof}
Fix \(P\) and \(\epsilon>0\). Choose a feasible policy under \(\mathcal C_0\) with risk within \(\epsilon\) of its infimum. Its simulator is feasible at budget \(b+\delta\) and has the same risk under \(P\). The infimum under \(\mathcal C_1\) is therefore at most \(R^*_{\mathcal C_0}(b;P,L)+\epsilon\). Letting \(\epsilon\downarrow0\) proves the claim.
\end{proof}

\begin{corollary}[Robust comparison]\label{cor:robust}
Under Proposition~\ref{prop:simulation}, the same causal simulation condition implies
\[
\inf_{\pi_1\in\Pi_1(b+\delta)}\sup_{E\in\mathcal E}R(\pi_1;E)
\leq
\inf_{\pi_0\in\Pi_0(b)}\sup_{E\in\mathcal E}R(\pi_0;E).
\]
\end{corollary}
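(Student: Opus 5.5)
The plan is to repeat the argument of Proposition~\ref{prop:simulation} with the expectation under a declared \(P\) replaced by the supremum over the environment family \(\mathcal E\). The key point is that the simulation hypothesis is stated uniformly: the simulator preserves the law of \(Z_0\) and respects the coupled cost bound in every allowed environment. Consequently, no step of the argument depends on \(P\), and it passes directly to worst-case risk.

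\textbf{Step 1: feasibility and pointwise risk equality.} Fix \(\pi_0\in\Pi_0(b)\), which is nonempty, and let \(\sigma(\pi_0)\) be its causal simulator under \(\mathcal C_1\). In each \(E\in\mathcal E\), the coupling gives \(C_1\preceq C_0+\delta\). Since \(C_0\preceq b\) almost surely, we get \(C_1\preceq b+\delta\) almost surely, so \(\sigma(\pi_0)\in\Pi_1(b+\delta)\). The loss \(L\) is common to both contracts and bounded, and it depends on the trajectory only through the externally consequential trace. The simulator preserves the law of that trace, so \(R(\sigma(\pi_0);E)=R(\pi_0;E)\) for every \(E\). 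Boundedness ensures all these risks are finite, so the suprema below are well defined.

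\textbf{Step 2: pass to the supremum and the infimum.} Taking suprema of the equality over \(E\) gives \(\sup_E R(\sigma(\pi_0);E)=\sup_E R(\pi_0;E)\). Hence
\[
\inf_{\pi_1\in\Pi_1(b+\delta)}\sup_{E\in\mathcal E}R(\pi_1;E)\le\sup_{E\in\mathcal E}R(\pi_0;E).
\]
This holds for every \(\pi_0\in\Pi_0(b)\), so taking the infimum over \(\pi_0\) yields the claim. If one prefers to mirror the proof of Proposition~\ref{prop:simulation}, an \(\epsilon\)-optimal robust \(\pi_0\) can be chosen instead, followed by letting \(\epsilon\downarrow0\).

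The only delicate point is Step~1's use of uniformity. Proposition~\ref{prop:simulation} as stated concerns a fixed \(P\), and it cannot be applied to each \(P\) separately and then combined: in general the supremum over \(E\) of the optimal risks is smaller than the optimal robust risk. The corollary therefore has to reuse the pointwise-in-\(E\) conclusion established inside the proof of the proposition, namely that the same simulator is feasible and risk-equivalent simultaneously in all environments. It also needs the simulator to be constructed from \(\pi_0\) alone, without reference to \(E\), which is what ``causal simulator under \(\mathcal C_1\)'' provides. Once this is stated explicitly, the rest is monotonicity of \(\sup\) and \(\inf\).
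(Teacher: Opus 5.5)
Your proof is correct and matches the paper's argument: a single environment-independent simulator is feasible at budget \(b+\delta\) and preserves risk in every \(E\in\mathcal E\), hence preserves worst-case risk, and taking infima gives the inequality. Your added remark is also right: applying Proposition~\ref{prop:simulation} separately to each environment would not suffice, because \(\sup\inf\le\inf\sup\). That is exactly why the uniform quantifier order is needed.
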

\begin{proof}
Each old policy has one simulator preserving its risk in every environment, hence preserving its worst-case risk. Taking infima gives the inequality.
\end{proof}
This conclusion requires no single future-demand mixture. The worst-case objective is an explicit decision criterion, and the simulator must work across the environment family.

The condition extends the classical simulation argument to explicit access and conversion costs. A free reversible representation is a special case. A simulator that must reread an archive or ask the user needs an allowance for that interaction. The following constructions isolate five structural dependencies of memory value.

\subsection{Query timing changes required capacity}\label{sec:timing}

Let \(H=(B_1,\ldots,B_n)\) contain independent fair bits, and let a later query request one coordinate. With no recovery, zero error for every history and query requires \(n\) fixed-length bits when the writer commits before seeing the query. Any two distinct histories differ at a coordinate that a future query can request. Merging those histories therefore loses a required distinction.

If the writer sees the query and can reread \(H\) before encoding, one answer bit suffices (Figure~\ref{fig:comparisons}a). For \(n=8\), changing revelation order changes the requirement by seven bits. That gap depends on the answers distinguished by the query family; duplicating query names for the same coordinate leaves the requirement at one bit. This coordinate-query construction also appears in session-handover analysis, where it yields a log-loss lower bound \citep{kato2026handover}. Comparing runtime compression with persistent writing therefore requires matching when the query becomes visible.

\subsection{A pointer can have value with zero answer information}\label{sec:pointer}

An archive has \(N\) opaque addresses holding independent fair bits \(B_1,\ldots,B_N\). The target address \(J\) is uniform and independent of the bits, and the answer is \(Y=B_J\). A paid read returns the selected bit and an indicator of whether its address is the target.

A retained pointer \(M=J\) satisfies \(I(M;Y)=0\): knowing the address leaves the answer bit fair. Yet it enables exact recovery in one read. Without the pointer, the target is found by search; zero error requires up to \(N\) reads and an expected \((N+1)/2\) reads when stopping at the first hit. At \(N=4\),
\[
I(M;Y)=0,
\qquad
\E[C_{\rm search}]=\tfrac52,
\qquad
\E[C_{\rm pointer}]=1.
\]
Under a one-read budget, risk falls from \(3/8\) to zero (Figure~\ref{fig:comparisons}b).

Thus answer mutual information alone cannot order the value of memory used through a paid retrieval interface. The address carries information about \emph{where to acquire} the answer. The construction also distinguishes persistent bits from acquisition cost: the pointer occupies \(\lceil\log_2N\rceil\) bits, while reads have their own unit. Appendix~\ref{app:pointer} gives the risk--cost curve and access assumptions.

\subsection{Memory and reader gains interact}\label{sec:interaction}

For a fair target bit, compare an empty state with a state containing the answer, and a constant-zero reader with a capable reader. The four risks are
\[
\begin{array}{c|cc}
&\text{constant reader}&\text{capable reader}\\ \hline
\text{empty memory}&1/2&1/2\\
\text{answer in memory}&1/2&0
\end{array}.
\]
The joint improvement is \(1/2\). Improving memory first produces zero immediate gain and assigns the subsequent gain to the reader. Improving the reader first assigns the subsequent gain to memory. Both paths agree with the full intervention table (Figure~\ref{fig:comparisons}c).

The example identifies complementarity while leaving the division of joint credit dependent on an attribution convention. A conditional memory effect can be measured with a fixed reader; a claim about the fraction of loss caused by storage requires more than the aggregate improvement.

\subsection{Feedback can reverse the ordering left unresolved by replay}\label{sec:feedback}

Policy \(i\in\{0,1\}\) opens with action \(a=i\) and answers the subsequent task with \(i\). The correct answer is the task type \(q\). Forced uniform replay gives both policies risk \(1/2\). In interaction, let \(p_a=\Pr(q=1\mid a)\). Two possible feedback environments give:
\begin{table}[H]
\centering\small
\caption{Identical forced-replay behavior is compatible with opposite interactive rankings.}
\label{tab:feedback}
\begin{tabular}{@{}lrrrr@{}}
\toprule
Environment & \(p_0\) & \(p_1\) & Policy 0 risk & Policy 1 risk \\
\midrule
\(E_+\) & \(0\) & \(1/4\) & \(0\) & \(3/4\) \\
\(E_-\) & \(3/4\) & \(1\) & \(3/4\) & \(0\) \\
\bottomrule
\end{tabular}
\end{table}

The forced-replay response table is identical in the two environments, but the interactive ranking reverses (Figure~\ref{fig:comparisons}d). Replay measures capability on supplied follow-ups. Estimating avoided clarification or user effort additionally requires the process by which those follow-ups arise.

\subsection{Unknown future demand permits several rankings}\label{sec:demand}

Suppose \(M_A\) stores a fair bit \(A\), \(M_B\) stores an independent fair bit \(B\), and a future query asks for \(A\) with probability \(p\). Their risks are
\[
R_A(p)=\frac{1-p}{2},
\qquad R_B(p)=\frac p2.
\]
Knowing only \(p\in[0.2,0.8]\) identifies a relative-value set
\(\{R_B(p)-R_A(p)\}=[-0.3,0.3]\).
Each memory is preferable under some admissible demand laws. A risk profile preserves that information. A unique choice requires an additional commitment, such as a specified mixture or a worst-case decision criterion. Demand uncertainty therefore belongs in the value definition.

\begin{figure}[t]
\centering
\includegraphics[width=\linewidth]{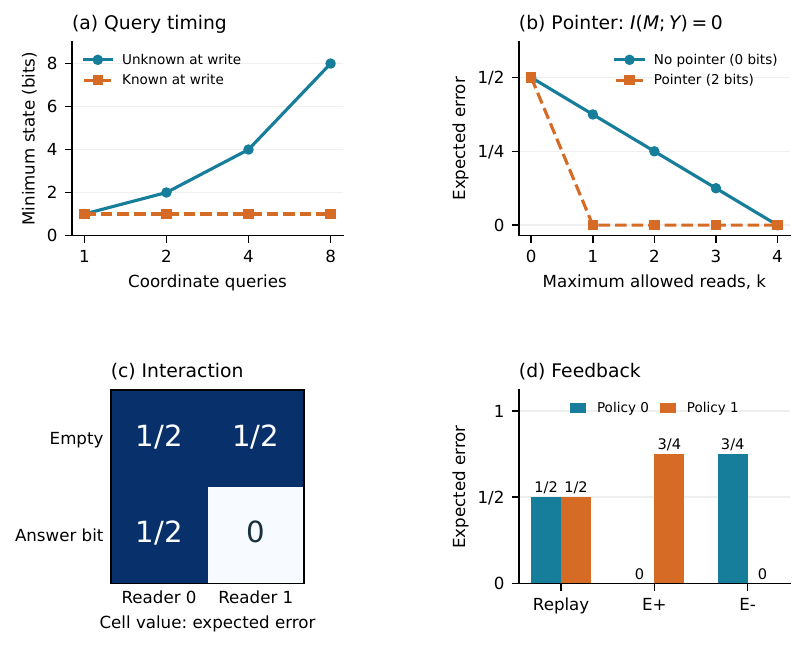}
\caption{Four mechanisms shape memory value. (a) Query timing changes required state size. (b) A two-bit pointer with \(I(M;Y)=0\) reduces risk under paid recovery. The horizontal axis is the maximum allowed reads \(k\); zero-error expected reads are \(5/2\) for search and \(1\) with the pointer. (c) Memory and reader improvements interact. (d) Feedback changes the ranking compatible with fixed replay. All values follow the constructions in Section~\ref{sec:counterexamples}.}
\label{fig:comparisons}
\end{figure}

\section{Identifying and measuring memory value}\label{sec:evidence}

The preceding account concerns retained capability, operational effects and recovery costs. Observing any of these requires a measurement process \(\mathcal J\) that connects system behavior to the declared loss. Keeping \(\mathcal J\) separate from the representation \(M\) and system \(\Sigma\) makes it possible to distinguish a change in memory value from a change in how that value is measured.

\subsection{The measurement process}

The measurement process \(\mathcal J\) observes evidence about \(\tau\) and assigns outcomes. Its errors include rejecting equivalent actions, confusing a proposal with an executed edit, or judging against a target from the wrong time. Such errors introduce uncertainty about system performance.

The task determines whether an output defect belongs to \(\Sigma\) or \(\mathcal J\). When a production consumer requires a machine-readable artifact, delivery in the wrong format has a real consequence. When a scorer rejects an alternative that satisfies the task, the scorer needs revision. Section~\ref{sec:fictional} gives a concrete example in which two operation words produce the same required disposition of completed work.

Measurement uncertainty propagates through the comparison. For \(n\) fixed units, \(w\) known errors and \(u\) unresolved labels under a \([0,1]\)-bounded loss, empirical loss lies in
\([w/n,(w+u)/n]\). The interval describes uncertainty in those labels; population inference requires a sampling model as well. Applying the same distinction to both conditions preserves uncertainty about their value difference.

\subsection{Conditions for a value comparison}\label{sec:contract}

Table~\ref{tab:contract} states the elements that fix the meaning of a comparison. The contract records both what is held common and what is deliberately changed. A study can isolate one element while retaining a clear account of the remaining operational conditions.

\begin{table}[H]
\centering\small
\caption{A comparison contract connects an evaluation result to its interpretation.}
\label{tab:contract}
\begin{tabularx}{\linewidth}{@{}p{0.33\linewidth}X@{}}
\toprule
Component & Question resolved \\
\midrule
Information and revelation order & What can the writer know before committing to memory? \\
Persistent state and side information & Which stored or externally supplied resource supports the answer? \\
Recovery and update interfaces & What can be reread, refreshed, written back or asked of the user? \\
Policies and computation & Which decisions are feasible under the available reader and tools? \\
Demand and environment & Which future uses and feedback processes enter the evaluation? \\
Consequences and loss & What makes an answer, action, delay or failed delivery costly? \\
Resource accounting & What is charged, at which stage, and in which units? \\
Measurement & Which labels are verified, ambiguous or unresolved? \\
\bottomrule
\end{tabularx}
\end{table}

The mathematical constructions have executable finite controls. Behavioral studies then examine retained information, recovery and revision using established conversational data, a fully specified update task and longitudinal workflows.

\paragraph{Operational references when the optimum is inaccessible.}
A nonempty finite panel \(\mathcal B\subseteq\Pi\) of admissible readers provides a tractable reference for optimal risk. Its best achievable risk bounds \(R^*(M)\) from above:
\begin{equation}
R^*(M)\ \leq\ U_{\mathcal B}(M)
\equiv\min_{\pi\in\mathcal B}R(\pi,M).
\label{eq:panel-reference}
\end{equation}
Panel membership is determined by lawful evidence and resource budgets, including any archive access or human assistance. Successful evidence-supported reads identify attainable behavior. Representation limits can be established from histories that induce identical lawful observations but require different answers. These two forms of evidence supply upper and lower risk bounds, respectively; Appendix~\ref{app:operational} develops the constructions and their statistical evaluation.

If simultaneously valid bounds are \(R^*(M)\in[\underline r_M,\overline r_M]\) and \(R^*(\varnothing)\in[\underline r_0,\overline r_0]\), they imply
\begin{equation}
V^*(M)\in
[\underline r_0-\overline r_M,\ \overline r_0-\underline r_M].
\label{eq:value-bounds}
\end{equation}
This interval preserves uncertainty about both the memory and the baseline, following the distinction between identification and sampling uncertainty \citep{manski2024identification}.

Evaluation connects these bounds to future uses through a task loss and a verifier calibrated on task-consistent answers and meaning-preserving variations. A fixed reader shows realized performance, while an admissible reference panel explores what the same evidence can support. Panel selection and evaluation use independent data or a uniform bound covering the selection procedure. Retrieval and revision interventions then reveal how additional resources change performance and preservation across uses. Together, these observations connect the information carried by memory to the actions and costs it produces.

\subsection{Complementary roles for public benchmarks}\label{sec:benchmarks}

Established benchmarks supply different observables for studying memory value. LoCoMo provides long multi-session conversations with question answering, event summarization and multimodal continuation \citep{maharana2024locomo}. LongMemEval examines extraction, multi-session and temporal reasoning, knowledge updates and abstention, and organizes memory systems into indexing, retrieval and reading stages \citep{wu2025longmemeval}. MemOps makes lifecycle operations explicit through operation traces, target binding and state transitions \citep{hao2026memops}. These task structures support complementary parts of the value account.

\begin{table}[H]
\centering\small
\caption{Public benchmarks as sources of evidence for the framework. The behavioral pilot in this paper uses LoCoMo.}
\label{tab:benchmark-roles}
\begin{tabularx}{\linewidth}{@{}p{0.19\linewidth}XX@{}}
\toprule
Benchmark & Useful observables & Value question it can help operationalize \\
\midrule
LoCoMo & Multi-session history and annotated evidence & What does a saved state support, and what can additional retrieval recover? \\
LongMemEval & Updates, time-dependent facts and abstention & Does later use follow the valid state, and when is additional evidence needed? \\
MemOps & Operation targets, traces and resulting states & What consequence should a memory operation have, and which other facts should persist? \\
\bottomrule
\end{tabularx}
\end{table}

LoCoMo supplies the behavioral evidence for retention and recovery below. The operation-level annotations released by MemOps support the complementary measurement study in Appendix~\ref{app:memops}.

\subsection{Retained state and recovery on LoCoMo}\label{sec:locomo}

We select four conversations and twelve questions from the public LoCoMo release using a fixed hash rule before model calls: one question from each supplied single-hop, temporal and multi-hop category per conversation. The histories span 19--31 sessions. A generic ledger writer sees each complete conversation but no evaluation questions or labels, and saves at most 4,096 UTF-8 bytes. Both writer and reader request \texttt{deepseek-v4-flash-0731}. Fresh requests use the same reader settings under five evidence conditions. Bounded recovery adds one BM25 retrieval request returning at most 4,096 bytes from the complete archive. Two references supply either the full history or benchmark-annotated evidence with neighboring turns.

Table~\ref{tab:locomo} separates ledger bytes from archive evidence supplied to the reader. The same four ledgers, averaging 3,823 bytes, are used with and without recovery. The first five conditions share a prompt requiring a JSON answer and source IDs and permitting abstention. The controlled variants change question placement, ledger presence or the answer-instruction bundle.

\begin{table}[H]
\centering\small
\caption{A fixed-reader LoCoMo diagnostic: 12 questions per condition. Mean UTF-8 byte counts are rounded to the nearest byte and exclude the question. Archive bytes measure supplied evidence. The first five rows share a reader prompt; the lower rows are diagnostic controls on the same questions.}
\label{tab:locomo}
\begin{tabularx}{\textwidth}{@{}Xrrrr@{}}
\toprule
& \multicolumn{2}{c}{Mean bytes supplied} & & \\
\cmidrule(lr){2-3}
Evidence available to the reader & Ledger & Archive & F1 (0--1) & Abstentions \\
\midrule
No history & 0 & 0 & 0.000 & 12/12 \\
Saved ledger & 3,823 & 0 & 0.016 & 11/12 \\
Ledger + bounded recovery & 3,823 & 4,065 & 0.329 & 3/12 \\
Complete history & 0 & 97,355 & 0.123 & 10/12 \\
Annotated support reference & 0 & 759 & 0.333 & 5/12 \\
\midrule
Full history, question last (control) & 0 & 97,355 & 0.297 & 5/12 \\
Bounded recovery only (ablation) & 0 & 4,065 & 0.280 & 6/12 \\
Full history, short-answer instructions & 0 & 97,355 & 0.512 & 0/12 \\
\bottomrule
\end{tabularx}
\end{table}

The query-blind ledgers expose distinct outcomes of writing and reading. A mentorship entry retains the message date but drops the preceding-weekend timing required by the question. Another ledger preserves both speakers' childhood memories with their fathers, yet the reader abstains. The first case loses a needed distinction during writing; the second retains relevant support that goes unused in the answer.

BM25 searches the archive from the question alone. On the same twelve packets, an empty ledger yields F1 of 0.280 and six abstentions; the saved ledger yields 0.329 and three abstentions. F1 is higher on five questions and unchanged on seven. On the childhood-memory question, retrieval alone answers one speaker and misstates the other; the ledger plus the same packet answers both.

The same readouts quantify realized value and its cost. With loss \(\ell=1-\mathrm{F1}\), the no-history baseline has mean loss 1 and the saved ledger 0.984, giving an empirical avoided loss of 0.016. Adding recovery lowers loss to 0.671, a further reduction of 0.313. Mean reader input rises from 1,083 to 2,238 tokens and mean output from 16 to 40 tokens, alongside one archive request returning 4,065 bytes on average. Ledger construction is shared by both conditions. Appendix~\ref{app:locomo-protocol} accounts for writing, reading and the archive resources that support this trade-off.

Keeping the complete history fixed also exposes the importance of the reading interface. The original read gives mean F1 of 0.123 and ten abstentions. Moving the question to the end gives 0.297 and five abstentions; the instrument question changes from abstention to \emph{guitar}. These readouts instantiate the dependence of \(R(\pi,H)\) on the policy that uses \(H\).

A further control adapts LoCoMo's short-answer prompt and temporal hint \citep{maharana2024locomo}, replacing the custom JSON, citation and abstention instructions while retaining the same histories, model and decoding settings. Mean F1 rises to 0.512 with no abstentions across the twelve questions. Word overlap also gives partial credit to an incorrect date: \emph{October 20, 2023} receives F1 of 0.5 against \emph{19 October 2023}. The score thus reflects both how the reader expresses an answer and how the loss evaluates it. The prompt adaptation and decoding parameters are specified in Appendix~\ref{app:locomo-protocol}.

\subsection{A two-project update with visible consequences}\label{sec:fictional}

The fictional world contains Cedar, an audit-and-experiment project, and Birch, a report with one valid citation and one citation requiring an owner proposal. New evidence completes Cedar's audit and advances its plan, while Birch remains unchanged. Two full-state rewrites and two Cedar-only updates save memories under a common 1,024-byte limit. Fresh readers receive a saved state and either a Cedar continuation request or a Birch report request.

Figure~\ref{fig:cedar} shows actual saved excerpts and reader output. All four updated states preserve the tested facts in both projects. The full rewrite can change Birch's wording while preserving its meaning. This makes the control useful for separating textual change, semantic preservation and action acceptance.

\begin{figure}[H]
\centering
\fbox{\begin{minipage}{0.94\linewidth}
\small
\textbf{Task context.} Cedar's audit has just finished. Prepare the next comparison locally, avoid repeating the audit, and check resources before a paid run. Birch's report and citation instructions remain unchanged.

\medskip
\begin{tabularx}{\linewidth}{@{}XX@{}}
\textbf{Full rewrite: saved Cedar excerpt} &
\textbf{Scoped update: saved Cedar excerpt}\\[2pt]
\textquotedblleft As of t5: Cedar audit completed; audit note cedar/audit.md delivered. Plan is now r3. Controlled comparison NOT started.\textquotedblright &
\textquotedblleft As of t5: Cedar audit completed; audit.md delivered. Plan r3. Controlled comparison NOT started.\textquotedblright\\
\end{tabularx}

\medskip
\textbf{Saved reader output, scoped update.}
\[
\begin{array}{l|lll}
\text{field} & \texttt{audit\_work} & \texttt{comparison\_work} & \texttt{paid\_launch}\\ \hline
\text{operation} & \texttt{hold} & \texttt{prepare} & \texttt{hold}
\end{array}
\]
The reader also reports the audit as complete and the comparison as pending. Its reason includes: \textquotedblleft Prepare local comparison design, no repeat audit.\textquotedblright

\textbf{Strict operation check.} Expects \texttt{reuse} for completed audit work, so it rejects this artifact.

\textbf{Consequence check.} Both \texttt{hold} and \texttt{reuse} keep the completed audit off the work queue; local preparation proceeds and paid execution remains gated. The artifact satisfies these task requirements.
\end{minipage}}
\caption{One concrete measurement disagreement. Excerpts come from the saved fictional study; the context sentence summarizes its task. The scoring review changes how the same output is assessed.}
\label{fig:cedar}
\end{figure}

Across 28 reader evaluations spanning updated-state and baseline conditions, the strict operation rule accepts 17 complete artifacts, whereas the consequence-based check accepts 22. All five discrepancies concern the \texttt{hold}/\texttt{reuse} equivalence in Figure~\ref{fig:cedar}. Applying the task's requirement to keep completed work off the queue changes measured acceptance by \(5/28\), or 17.9 percentage points. The task requirements, \(M\), \(\Sigma\) and all saved outputs remain fixed; the change is in \(\mathcal J\)'s interpretation of the operation field.

\subsection{Three longitudinal failure cases}\label{sec:cases}

The longitudinal cases complement the Cedar/Birch control by tracing preservation, use and revision through ongoing software and research work. They follow one owner's projects across sessions and two agent tools. Writers processed a continuous history exceeding six million tokens before the tested follow-ups; revisions were saved as persistent states and evaluated in fresh reader contexts.

\paragraph{Case 1: Avoidable overwriting under a capacity limit.}
A retrieval-assisted revision improved support for a research-planning request but omitted the state needed for another project's report. A reader then treated that report state as unknown. We appended the unchanged report paragraph from the earlier memory and repeated both reads. The augmented state still fit the original 8 KiB limit: report-state readout recovered while the target readout in the inspected lineage was preserved. The intervention establishes a feasible state containing both pieces of support and rules out capacity-forced deletion for this observed revision.

\paragraph{Case 2: Unsupported completion can score as successful recall.}
A rewrite with no new evidence asserted that an earlier revision was complete, although the input state supplied no basis for that completion claim. The statement happened to match the historical label, increasing answer accuracy. Under an answer-only loss this is a correct prediction; under an evidence-grounded continuation contract its support remains absent. Checking the writer's evidence access distinguishes the two interpretations.

\paragraph{Case 3: Correct state readout can lead to an invalid edit.}
A reader recovered current report identifiers but proposed replacing a citation whose existing use was valid. The report contained a separate citation problem in another section. A fact-level readout therefore missed the actionable distinction between a current identifier and an occurrence that requires revision. An artifact-level acceptance check exposed the proposed harm. The required memory includes the relationship among a source, its location and its rationale.

\Needspace{14\baselineskip}
\section{Design principles and testable hypotheses}\label{sec:implications}

The value framework turns memory design into three connected decisions: what to preserve now, what to acquire when needed, and what to revise when evidence or goals change. The following directions connect established theory to concrete memory mechanisms. Their predictions define the next experimental stage.

\subsection{Allocate state and recovery by decision consequences}

Information acquisition and rational metareasoning choose further evidence or computation according to the improvement it can make to a decision, together with its cost \citep{hay2012computations}. Memory$^3$ already formulates memory-format allocation through writing cost and expected reuse-dependent reading cost \citep{yang2024memory3}. Bringing task loss and access failures into this account lets us compare retained content, a locator and direct archive search by their eventual decision consequences.

Consider one future use opportunity, with demand probability \(p\) and upfront costs \(c_{\rm keep}\) for content and \(c_{\rm idx}\) for a locator. Conditional on demand, retained content is used incorrectly with probability \(r_k\); recovery fails with probability \(q\), and successfully recovered content is used incorrectly with conditional probability \(r_r\). Failed recovery or incorrect use incurs loss \(L\). Let \(c_{\rm use}\) and \(c_{\rm read}\) be the conditional expected runtime costs of direct use and the full recovery-and-use route. With costs priced in the same loss unit, retained content has no greater expected loss-plus-cost when
\begin{equation}
c_{\rm keep}+p(c_{\rm use}+r_kL)
\leq c_{\rm idx}+p\bigl(c_{\rm read}+[q+(1-q)r_r]L\bigr).
\label{eq:allocation}
\end{equation}
The reliable-use special case sets \(r_k=r_r=c_{\rm use}=0\). Consequences, access reliability, reader use and price can each change the allocation. Appendix~\ref{app:allocation} derives the condition and gives a preference reversal caused by retained-content use failure.

Shared memory extends this accounting across future demands. A retained state or index incurs its construction cost once; over \(m\) uses, the upfront contribution to average cost is \(c_{\rm keep}/m\) or \(c_{\rm idx}/m\). Reading and revision costs accrue as the state is used. This amortization can make an initially expensive representation economical when it supports repeated decisions. Interactions between uses enter through their joint trajectory loss, preserving the complementarity illustrated in Section~\ref{sec:interaction}.

\paragraph{Hypothesis 1: consequence-aware allocation.}
When error stakes, recovery reliability and reader-use reliability vary, an allocation policy using calibrated estimates of these quantities will improve the held-out risk--cost frontier over occurrence-frequency allocation. Varying these factors independently of frequency tests whether the policy responds to decision consequences. A recovery-only baseline and a common account of prediction, indexing and reading costs expose the trade-off between better allocation and the computation needed to choose it. Perturbing the value estimates tests the policy's sensitivity to estimation error.

\subsection{Preserve options for demands revealed later}

Irreversible-decision theory accounts for the options lost when a commitment precedes later information \citep{arrow1974irreversibility}. A memory analogue is the choice between irreversible compression and a retained route back to an authorized source. Versioned evidence, stable locators and rollback can preserve responses to later demands, while adding storage, maintenance and recovery costs.

\paragraph{Hypothesis 2: recoverability under changing demands.}
For workloads with later changes of objective and affordable, reliable archival access, retaining a priced recovery path will reduce the cost of reaching a target loss while preserving other uses, compared with irreversible summaries at matched total resource allowance. Unannounced demand changes across session boundaries reveal how this option affects the revision frontier in Equation~\eqref{eq:frontier}. Source expiry and increasing recovery prices test how long the option remains useful.

\subsection{Revise along evidence dependencies}

Truth-maintenance systems retain justifications and revise dependent beliefs when their supporting assumptions change \citep{doyle1979truth}. This suggests storing the source, object, version and reason behind a memory claim. A changed premise should trigger review of the claims and plans that depend on it, including dependencies crossing project boundaries. A document or project boundary alone need not contain the affected scope.

\paragraph{Hypothesis 3: dependency-directed revision.}
With sufficiently accurate dependency records, revising the affected dependency closure will reduce stale conclusions and unnecessary overwriting relative to document-scoped edits, at matched total cost. Target correctness and preservation across uses track the consequences of each update, while removing or corrupting dependency links reveals sensitivity to the quality of those records. A propose--validate--commit cycle with rollback makes the resulting changes reviewable and reversible. The update scope follows the dependencies of the changed support.

A next comparison pairs blinded fault diagnosis with its selected repairs, evaluated against no repair and full-state rewriting under matched information and resource budgets. Localization error, downstream task loss, effects on other uses and cost are reported separately.

\Needspace{10\baselineskip}
\section{Evidence and study design}\label{sec:limits}

The studies connect the value framework to observable answers, actions and costs. LoCoMo supplies twelve questions across four histories, with one reader draw per condition; the authored two-project world supplies repeated writer and reader draws; and longitudinal cases trace one owner's workflows. Paired readouts quantify realized loss differences and recovery costs, while intermediate states and action checks reveal how preserved evidence supports subsequent work. The MemOps study complements these observations with author inspection of measurement flags from a fixed Forget release.

The question-placement, recovery-only and short-answer controls were developed after the initial reads, as was the consequence-based artifact check. Appendix~\ref{app:proofs} presents the finite constructions and a history-level sampling design for statistical evaluation; Appendix~\ref{app:experiments} records data selection, model settings and scoring. Longitudinal cases are presented at project-neutral abstraction, with original conversations and project files retained internally.

\section{Conclusion}\label{sec:conclusion}

Memory design allocates decision capability across time. By connecting statistical decision theory, information acquisition and the value of computation, this paper follows that capability from historical evidence through storage, use, recovery and revision. The resulting account explains how information can remain valuable when a reader fails, how access can create value without storing an answer, and how improving one use can impose costs on another. These relations make retention, retrieval and updating comparable design choices. They provide a foundation for memory systems that preserve useful distinctions, maintain economical recovery paths and revise plans as evidence and demands evolve.

%% file: sections/appendix.tex
\section{Assumptions, proofs and exact controls}\label{app:proofs}

This appendix specifies the finite decision problems underlying the main-text counterexamples. Encoders, decoders, interfaces and losses are shared parts of each construction. The stated quantities are evaluated with exact arithmetic.

\subsection{Decision signatures before a query is revealed}\label{app:signature}

\begin{proposition}[Zero-error signature bound]
Let \(\mathcal H\) and \(\mathcal Q\) be finite, and let query \(q\) require the deterministic answer \(t_q(h)\). A deterministic writer stores \(f(h)\) before \(q\) is revealed; the reader observes only \(f(h)\) and \(q\), with no recovery access. Define
\[
T(h)=(t_q(h))_{q\in\mathcal Q},
\qquad K=|\{T(h):h\in\mathcal H\}|.
\]
Exact correctness for every \((h,q)\) requires at least \(K\) memory states and is achievable with \(K\) states. A fixed-length representation needs \(\lceil\log_2K\rceil\) bits.
\end{proposition}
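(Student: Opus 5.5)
The plan is to prove the lower bound and the achievability separately, and then convert the count of states into bits.

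\textbf{Lower bound.} Take any deterministic writer \(f\) and any reader \(d\) that is exactly correct, so \(d(f(h),q)=t_q(h)\) for every \((h,q)\). Suppose two histories \(h,h'\) have different signatures, \(T(h)\neq T(h')\). Then some query \(q\) has \(t_q(h)\neq t_q(h')\). If the writer stored the same state for both, \(f(h)=f(h')\), the reader's output at \(q\) would be \(d(f(h),q)=d(f(h'),q)\). That single output cannot equal both required answers, which contradicts exact correctness. So \(f\) must separate histories with distinct signatures.

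Now choose one representative history for each of the \(K\) signature values. By the separation property, their stored states are pairwise distinct. Hence the range of \(f\) contains at least \(K\) states. This is the same merging argument sketched for the coordinate-query construction in Section~\ref{sec:timing}, stated for a general finite query family.

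\textbf{Achievability.} Let the writer store the signature itself, \(f(h)=T(h)\), or any injective relabeling of the set \(\{T(h)\}\) onto \(K\) symbols. The reader, given \(T(h)\) and \(q\), returns the \(q\)-th coordinate \(t_q(h)\). This is exactly correct for every pair and uses exactly \(K\) states. No recovery access is needed, because the signature fixes every answer the query family can request.

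\textbf{Bit count.} A fixed-length binary code of length \(\ell\) has \(2^\ell\) codewords. Combining this with the lower bound, exact correctness needs \(2^\ell\geq K\), that is, \(\ell\geq\lceil\log_2K\rceil\). Conversely, \(\lceil\log_2K\rceil\) bits suffice to index the \(K\) signatures injectively.

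The main point requiring care is the degenerate case \(K=1\). Here every history has the same signature, so one state suffices and the bound is \(0\) bits. The formula \(\lceil\log_2 1\rceil=0\) gives this correctly, but I will state the convention explicitly.

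Beyond that, the argument relies on the restriction to deterministic writers and to a reader that sees only \(f(h)\) and \(q\). Randomization or side information would break the merging step, so I will point out where each hypothesis is used. Finally, I will note the consequence for the coordinate example: there \(T\) is injective on \(\{0,1\}^n\), so \(K=2^n\) and \(n\) bits are required.
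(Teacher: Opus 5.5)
Your proof is correct and follows the paper's argument: merging two histories with distinct signatures leaves the reader a single input that cannot match both required answers; storing an index of the signature with a shared lookup reader attains \(K\) states. Your explicit \(2^\ell\geq K\) bit count and \(K=1\) convention fill in details the paper leaves implicit. One caution on your planned side remark: randomizing the writer or reader does not defeat the zero-error bound, because histories with distinct signatures must then have disjoint state supports. Only side information or recovery access genuinely changes the conclusion.
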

\begin{proof}
If \(T(h)\ne T(h')\), some query has distinct target answers. Merging these histories gives the reader the same input for that query, preventing correctness on both. Every distinct signature therefore needs a separate state. Conversely, an index of the signature and a shared lookup rule suffice.
\end{proof}

The specification and decoder are shared and free in this construction. For an \(n\)-bit history and arbitrary coordinate queries, \(K=2^n\). A writer that instead sees \(q\) and can reread the history stores only the requested bit. In both cases the reader receives the query \(q\) for free; the difference is its revelation to the writer.

\subsection{Information order and actual-reader limits}\label{app:information}

\begin{proposition}[Simulation and optional ignorance]
Let \(M=f(H)\), with common side information \(S\), demand law and bounded loss. Suppose the raw-history policy class can compute \(f\) and simulate every memory policy within its constraints, and the memory policy class can simulate every no-memory policy by ignoring \(M\). Then
\[
R^*(H)\leq R^*(M)\leq R^*(\varnothing).
\]
\end{proposition}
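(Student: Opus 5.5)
The plan is to prove each inequality by a simulation argument in the style of Proposition~\ref{prop:simulation}, with no access costs (\(\delta=0\)): every policy in the class on the right is matched by a policy in the class on the left with the same risk. Taking infima then gives the result. The two inequalities are handled separately. Both use the same \(\epsilon\)-approximation step, so that no infimum needs to be attained.

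\textbf{First inequality, \(R^*(H)\leq R^*(M)\).} Fix \(\epsilon>0\) and choose a memory policy \(\pi\in\Pi_M\) with
\[
\E_P[\ell(Y,Q,\pi(M,S,Q))]\leq R^*(M)+\epsilon .
\]
Define the history policy \(\tilde\pi(H,S,Q)=\pi(f(H),S,Q)\). By hypothesis the raw-history class can compute \(f\) and simulate \(\pi\), so \(\tilde\pi\) is admissible. Since \(M=f(H)\) pointwise, \(\tilde\pi\) produces exactly the action \(\pi\) produces on every realization of \((H,S,Q,Y)\). Its risk therefore equals that of \(\pi\) under the common demand law and loss. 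Hence \(R^*(H)\leq R^*(M)+\epsilon\), and letting \(\epsilon\downarrow0\) gives the claim. If policies are randomized, the simulation uses the same internal randomness, so the joint law of actions and \((Y,Q)\) is still preserved.

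\textbf{Second inequality, \(R^*(M)\leq R^*(\varnothing)\).} Take an \(\epsilon\)-optimal no-memory policy \(\pi_0(S,Q)\). By optional ignorance, \(\hat\pi(M,S,Q)=\pi_0(S,Q)\) lies in the memory class. It induces the same action on each realization, so the risks agree. Taking infima and letting \(\epsilon\downarrow0\) again gives the claim.

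\textbf{Consequences and the main obstacle.} Combining the two inequalities gives \(V^*(H)\geq V^*(M)\geq0\) and \(D^*(M)\geq0\), via \eqref{eq:value} and \eqref{eq:value-accounting}. Boundedness of the loss makes every infimum finite, so these differences are well defined. The step that needs care is that ``the same action realization-wise'' really yields equal risk. This requires the common side information \(S\) and the demand law \(P\) to have the same joint law with \((Y,Q)\) under both conditions. It also requires the simulated policy to see nothing beyond its declared inputs, in particular no information about \(Q\) at writing time, since \(f\) is applied to \(H\) alone. I would state this coupling explicitly: one probability space carrying \((H,S,Q,Y)\), with \(M\) defined as \(f(H)\) on it. Under that coupling the equality of risks is immediate.
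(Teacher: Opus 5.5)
Your proposal is correct and follows the paper's proof. You compose each memory policy with \(f\) to obtain a raw-history policy with identical actions, realize each no-memory policy inside the memory class by ignoring \(M\), and take infima; the \(\epsilon\)-approximation and explicit coupling only spell out the paper's one-line ``taking infima'' step.
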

\begin{proof}
Compose any memory policy with \(f\) to obtain a raw-history policy with the same actions. Execute any no-memory policy by ignoring \(M\). Taking infima over the respective policy classes gives the two inequalities.
\end{proof}

Under these conditions, the decomposition in Equation~\eqref{eq:value-accounting} has a nonnegative residual \(D^*(M)\). Its components share the unit of the common loss, and their ordering depends on simulation closure within the allowed resources.

A fixed reader can conceal this potential-value difference. For a fair answer bit \(Y\), a constant-zero reader has error \(1/2\) with either \(M=Y\) or \(M=\varnothing\). Its output law is identical in both conditions although an unrestricted reader has incremental values \(1/2\) and \(0\). Inspecting the representation or changing the reader can add the information needed to distinguish these explanations.

\subsection{The pointer's risk--cost profile}\label{app:pointer}

Let \(J\) be uniform on \(N\) opaque addresses, independently of fair bits \(B_1,\ldots,B_N\). A read at \(i\) returns \((B_i,\mathbf1\{i=J\})\). The only route to an unobserved bit or its target status is this read interface. The answer is \(Y=B_J\).

After any sequence of misses, unread addresses remain exchangeable and the target bit remains fair. With at most \(k\leq N\) distinct reads, a policy hits the target with probability at most \(k/N\). Reading distinct addresses and stopping at the first hit attains
\[
R_k^*=\frac{1-k/N}{2},
\qquad
\E[C_k]=\sum_{i=1}^k\left(1-\frac{i-1}{N}\right)
=k-\frac{k(k-1)}{2N}.
\]
The observed non-target bits provide no information about the unread target bit. In particular, zero error requires hitting the target, and the expected read count is \((N+1)/2\).

For \(M=J\), \(\Pr(Y=1\mid J=j)=1/2\) for every \(j\), so \(I(M;Y)=0\). The pointer directs one read to the target and attains zero error. Here addresses are opaque and a free locator is unavailable. If this single fixed answer were the writer's only future objective, storing the answer bit directly would use less space than a multi-bit pointer. The example isolates access value; alternative future queries, semantic filenames and index construction define other contracts.

\subsection{The cost-aware simulation condition}\label{app:simulation}

Fix a finite horizon, an environment family \(\mathcal E\), and a bounded loss \(L\) on external interaction consequences \(Z\). Write \(\Pi_i(b)\) for policies under contract \(i\) whose nonnegative cost vector is componentwise at most \(b\), almost surely in every \(E\in\mathcal E\). For a nonnegative overhead vector \(\delta\), the condition is
\[
\begin{gathered}
\forall\,\pi_0\in\Pi_0(b)\quad
\exists\,\pi_1\in\Pi_1(b+\delta)\quad
\forall\,E\in\mathcal E,\\
Z_1\overset{d}=Z_0,
\qquad C_1\leq C_0+\delta\quad\text{under a coupling}.
\end{gathered}
\]
The simulator is one causal policy across the family; its observations and actions respect the information order of contract \(1\). An environment-specific choice made using unavailable knowledge would change the quantifier order. Matching externally consequential histories preserves any declared loss measurable on those histories, including the consequences of earlier actions for later observations.

The exact control uses two worlds, two output actions and costs in archive reads and conversion steps. It checks five priors, \(3^4=81\) loss matrices and four hard budgets, yielding 1,620 exact risk comparisons. A read costs \((1,0)\) in the original interface and \((1,1)\) in the simulated one. At budget \((1,0)\), their optimal errors are \(0\) and \(1/2\); adding the declared conversion allowance gives zero error for both. This illustrates the role of \(\delta\) in Proposition~\ref{prop:simulation}.

\subsection{Interaction and feedback calculations}

The risk table in Section~\ref{sec:interaction} has \(R(0,0)=R(1,0)=R(0,1)=1/2\) and \(R(1,1)=0\). Allocating successive reductions in memory-first order assigns \((0,1/2)\) to memory and reader; reader-first order assigns \((1/2,0)\) to the same components. The cross-difference is \(-1/2\), exposing their interaction.

For the feedback construction, policy \(i\) answers \(i\), so its risk is \(p_0\) for policy 0 and \(1-p_1\) for policy 1. The two feedback kernels in Table~\ref{tab:feedback} give risk pairs \((0,3/4)\) and \((3/4,0)\). Their mixtures yield a relative-value set spanning \([-3/4,3/4]\). Forced uniform replay identifies conditional response behavior while leaving that mixture unspecified.

\subsection{Persistent revision and feasible costs}

A finite example separates a true capacity trade-off from avoidable overwriting. Let independent fair bits \(A,B\) be the history. Initially a two-state memory stores \(A\). A revision must decode \(B\) exactly for all four histories while retaining at most two states and having no recovery access. Histories with \(B=0\) and \(B=1\) must occupy disjoint nonempty state sets. Each set therefore has one state, leaving no distinction about \(A\); its Bayes error is \(1/2\). Four states suffice for exact support of both bits.

Equation~\eqref{eq:frontier} defines the feasible repair-cost set for a target-risk threshold and preservation tolerance. The two-bit construction gives an explicit infeasibility example under a fixed state budget and absent recovery access.

If a removed paragraph fits back into the same state budget and restores the tested behavior, that intervention exhibits a feasible alternative to the observed deletion. It addresses the proposed capacity explanation for that state. The two-bit construction above describes the complementary case in which the state budget genuinely prevents simultaneous support.

\subsection{Failure and measurement uncertainty}

With delivery-failure probability \(p_F\), conditional delivered risk \(r\), and failure loss \(\rho\), total expectation gives \(R=(1-p_F)r+p_F\rho\). A system's memory-dependent failure probability can reverse an improvement conditional on successful delivery. Simulation monotonicity applies when the simulator also respects these operational consequences.

For \(n\) fixed units with \(w\) known errors and \(u\) unresolved labels bounded in \([0,1]\), empirical loss lies in \([w/n,(w+u)/n]\). If baseline and memory-condition intervals are \([l_0,u_0]\) and \([l_1,u_1]\), their value difference lies in \([l_0-u_1,u_0-l_1]\). A delivery failure with a specified penalty, a deliberate abstention and an unresolved evaluator label enter this calculation according to their respective loss definitions.

\subsection{Operational risk bounds and implementation regret}\label{app:operational}

For any \(\pi\in\Pi\), the definition of an infimum gives
\(G_\pi(M)=R(\pi,M)-R^*(M)\geq0\).
Expanding the two incremental values yields Equation~\eqref{eq:regret-gap}.
Its right side can be negative. For example, let the target always be zero and let memory store zero. The optimal risk is zero with either memory or no memory. A fixed reader that answers one without memory and zero with it has \(V_\pi(M)=1\), \(V^*(M)=0\), and \(G_\pi(M)=0\). The incremental-value difference is therefore \(-1\), despite zero implementation regret with memory.

Reader comparisons and representation analysis supply complementary evidence about optimal risk. For a fair bit \(Y\), a constant-zero reader has risk \(1/2\) both when \(M=Y\) and when \(M\) is constant, although the unrestricted optimal risks are zero and \(1/2\), respectively. A capable reader distinguishes the retained capability in these two states and lowers the upper bound in Equation~\eqref{eq:panel-reference}. A lower bound follows from two equally likely histories with different binary answers whose lawful observations remain identical under every allowed recovery action: their zero--one risk is at least \(1/2\).

For a sampling guarantee, fix \(K\geq1\) admissible policies before evaluating \(n\geq1\) independent episodes from the declared target law, with loss in \([0,1]\). Let \(\widehat R_\pi(M)\) be their empirical mean losses. Hoeffding's inequality \citep{hoeffding1963probability} and a union bound imply, for \(0<\alpha<1\), with probability at least \(1-\alpha\),
\[
R^*(M)\leq
\min\!\left\{1,\ \min_{\pi\in\mathcal B}\widehat R_\pi(M)
+\sqrt{\frac{\log(K/\alpha)}{2n}}\right\}.
\]
Each true policy risk is at most its empirical risk plus the displayed radius on the simultaneous event; taking minima and applying Equation~\eqref{eq:panel-reference} proves the claim. All policies can be evaluated on the same episodes. A panel selected or tuned on these outcomes is evaluated on fresh episodes or covered by a uniform bound for the selection procedure.

For studies with several queries per history, a history-level sampling design makes this guarantee directly usable. Draw \(n\) independent history episodes from the target law, each containing its queries and reader outcomes. For history \(j\) with \(m_j\geq1\) queries, define
\[
\overline\ell_{\pi j}=\frac{1}{m_j}\sum_{t=1}^{m_j}\ell_{\pi jt},
\qquad
\widehat R_\pi=\frac1n\sum_{j=1}^{n}\overline\ell_{\pi j}.
\]
The per-history means lie in \([0,1]\), so the same bound applies with \(n\) histories even when queries within a history are dependent. The target risk averages histories equally and then averages their queries. Thus shared histories determine the aggregation and sampling design used to estimate memory value.

Subtracting simultaneously valid risk intervals gives Equation~\eqref{eq:value-bounds}; optional ignorance further restricts the value interval to its nonnegative part. Reader measurements and representation arguments can therefore be combined to sharpen the same value comparison.

\subsection{A priced allocation calculation}\label{app:allocation}

Let \(D\) denote demand, with \(p=\Pr(D)\). Define \(r_k=\Pr(F_k\mid D)\) for incorrect retained-content use, \(q=\Pr(F_{\rm rec}\mid D)\) for failed recovery, and \(r_r=\Pr(F_r\mid D,F_{\rm rec}^{c})\) for incorrect use after successful recovery. The recovery route's conditional failure probability is \(q+(1-q)r_r\) by total probability; \(r_r\) is already conditional on recovery success. Parameters on zero-probability branches contribute zero.

The one-opportunity model assigns loss \(L\) to failed recovery or incorrect use and zero loss to correct use or no demand. Upfront costs are \(c_{\rm keep}\) and \(c_{\rm idx}\); \(c_{\rm use}\) and \(c_{\rm read}\) are expected runtime costs conditional on demand, covering each entire route, including failed attempts. Under declared resource prices \(c=\lambda^\top C\), the two expected objectives are
\[
\begin{aligned}
J_k&=c_{\rm keep}+p(c_{\rm use}+r_kL),\\
J_r&=c_{\rm idx}+p\bigl(c_{\rm read}+[q+(1-q)r_r]L\bigr).
\end{aligned}
\]
Equation~\eqref{eq:allocation} is \(J_k\leq J_r\); equality is a tie. For other losses or fallback actions, replace the failure terms by each route's conditional expected task loss. A budget-constrained comparison can retain the resource vector instead of assigning scalar prices.

For a route \(a\in\{k,r\}\) shared across \(m\) subsequent uses, let \(c_a\) be its upfront cost, \(C_{a,t}\) the priced runtime cost of use \(t\), and \(\ell_{a,t}\) its task loss. With additive task loss,
\[
J_a^{(m)}=c_a+\sum_{t=1}^{m}\E[C_{a,t}+\ell_{a,t}].
\]
Dividing by \(m\) gives the upfront contribution \(c_a/m\). Linearity of expectation permits dependencies between uses. General task interactions are represented by replacing the sum of expected task losses with \(\E[L(\tau)]\), retaining the same resource account.

For the reliable-use case \(r_k=r_r=c_{\rm use}=0\), let \(c_{\rm keep}=1\), \(c_{\rm idx}=0.1\), and \(c_{\rm read}=0.02\). A frequent low-stakes use with \((p,L,q)=(0.9,1,0)\) favors the locator, whose total expected cost is \(0.118\). A rare high-stakes use with \((p,L,q)=(0.01,1000,0.2)\) favors retained content: the locator costs \(2.1002\). Discarding all support costs \(pL\), respectively \(0.9\) and \(10\). In the second case, changing only retained-use failure to \(r_k=0.2\) raises \(J_k\) from \(1\) to \(3\), reversing the preference while \(J_r\) remains \(2.1002\). These finite examples separate access frequency, consequences and actual use.

\section{Diagnostic protocols}\label{app:experiments}

\subsection{Exact constructions}

Exact controls for value, signatures, recovery and comparison run locally without model access. Enumeration counts describe the checked finite parameter space.

\subsection{LoCoMo protocol and resource account}\label{app:locomo-protocol}

The source is the public LoCoMo dataset, \texttt{locomo10.json}. Selection sorts conversation IDs and eligible question indices by a declared SHA-256 rule. The four histories are \texttt{conv-26}, \texttt{conv-47}, \texttt{conv-50} and \texttt{conv-30}. Eligible questions have resolving evidence IDs, omit explicit image requests, and have no image URL in their annotated evidence turns. Serialized histories include speakers, timestamps, record IDs, text and published captions; QA labels and supplied summaries are excluded from writer input.

The requested writer and reader identifier is \texttt{deepseek-v4-flash-0731}, accessed through a compatible API at temperature \(0.2\) with the service's thinking option disabled. Writers have a 1,200-token output limit and readers a 450-token limit. The five primary evidence conditions use the same reader instructions and fresh request structure. The four accepted ledgers occupy 3,699--3,978 bytes. One length-limited write is retained alongside its permitted retry. All sixty reads deliver the required answer-and-citation object.

For the value--cost comparison in Section~\ref{sec:locomo}, the twelve ledger reads consume 12,991 input and 195 output tokens; the twelve ledger-plus-recovery reads consume 26,856 input and 481 output tokens. Their per-question differences are 1,155.42 input and 23.83 output tokens, plus one archive request. Constructing the four shared ledgers, including the permitted retry, consumes 146,599 input and 5,065 output tokens. Writing is charged once and allocated across downstream uses. Archive storage and local search are additional resource coordinates; their monetary and computation costs were not measured. A priced comparison combines these coordinates using the deployment's resource prices.

The released LoCoMo functions use stemmed token F1, with a multi-answer variant for category 1. Scoring uses the released metric functions with NLTK stemming. Semantic inspection checks how word overlap and cited evidence relate to the required answer.

The no-history condition records performance without supplied conversation evidence, including any prior familiarity with the public corpus; model exposure to LoCoMo during training was not measured. In the original full-history condition, the two substantive answers score \(2/3\) and \(13/16\), and ten abstentions score zero, yielding mean F1 of 0.123 across the twelve questions.

BM25 ranks individual timestamped turns and adds adjacent turns from the same session while keeping whole chunks within the return cap. It returns 11 of 19 annotated evidence IDs across the twelve questions. The annotation reference includes each labeled turn and its immediate same-session neighbors. This selection can omit wider context, so it supplies a diagnostic reference rather than a certified optimum. Complete-history access is separately reported.

The question-placement control uses the complete histories with the question block moved to the end. The recovery-only ablation reuses the original BM25 packets with an empty ledger field. The primary conditions and three controls produce 96 reader outputs. The short-answer control uses \texttt{QA\_PROMPT} and the category-2 date hint from the repository's \texttt{task\_eval/gpt\_utils.py}, placed after the unchanged record text in one user message. It retains temperature 0.2 and the 450-token limit; the upstream single-question path uses temperature 0 and 32 tokens. The upstream batch path also uses JSON, but without our source-ID requirement. This control is a prompt adaptation on the selected questions, with the original F1 functions. All controls share the same twelve questions.

\subsection{Fictional update study}

These writer and reader calls also request \texttt{deepseek-v4-flash-0731}, at temperature 0.2 with thinking disabled. Cedar's target state has four fields: completed audit, pending comparison, plan revision \texttt{r3}, and a resource check before paid execution. Birch's four fields identify its report revision, saved commit and two citation roles. Cedar actions should avoid repeating the audit, prepare the comparison locally, and keep paid execution gated. Birch actions should preserve the valid citation and propose the other correction to its owner.

The study uses two writer draws per update policy and two reader draws per state and task. Alongside the four updated memories, readers receive an empty state, the prior state, or a selected source reference, giving 28 reader units in the clarified protocol. The four updated states support all tested fact fields in both reads of both projects. The strict complete-artifact endpoint accepts 17/28; the post hoc consequence endpoint accepts 22/28. All five changes concern the \texttt{hold}/\texttt{reuse} disposition shown in Figure~\ref{fig:cedar}.

\Needspace{12\baselineskip}
\section{Measurement checks on a released lifecycle benchmark}\label{app:memops}

MemOps provides operation traces, target identities and post-operation states for examining memory behavior. We inspect two deterministic postchecks and a cross-setting pairing rule. These postchecks assess memory operations separately from final-answer scoring.

We audit all 80 published Forget fixtures in the repository's evidence-conversation directory.\footnote{\url{https://github.com/MemTensor/MemOps}} The fixtures contain 400 question pairs, each instantiated in adjacent and longitudinal settings. For every row, we insert its published expected answer into the response field and run the existing deterministic postchecks. Pair members have identical answers and postcheck results, so Table~\ref{tab:memops} counts each pair once.

\begin{table}[H]
\centering\small
\caption{Deterministic postcheck flags under gold-answer replay. The denominator is 400 question pairs from 80 fixtures. Semantic classification concerns the flagged diagnostic.}
\label{tab:memops}
\begin{tabularx}{\linewidth}{@{}p{0.22\linewidth}rrX@{}}
\toprule
Postcheck & Pairs & Fixtures & Inspection result \\
\midrule
Over-forgetting & 24 & 20 & Retained information is available; the deletion phrase refers to another target or operation. \\
Disclosure & 11 & 10 & The gold answer repeats a forgotten detail; task guidance conflicts with the disclosure rule. \\
\bottomrule
\end{tabularx}
\end{table}

The over-forgetting check searches a 120-character window around a retained term for deletion expressions, with a list of retention-wording exceptions. This window can cross a semantic boundary. In fixture D22, the gold answer says that a Marriott loyalty record was removed while the Hilton preference \emph{remains fully usable}. The check flags Hilton as forgotten. Replacing that phrase with \emph{remains available} changes the flag from one to zero while preserving the same facts. This post hoc wording control isolates a measurement effect.

The disclosure conflicts require a different response. Historical-summary and candidate-selection gold answers reproduce details of records that the user requested to delete. Those details trigger the stated disclosure rule. Accepting every gold answer would conceal the inconsistency; the task and deletion semantics must first be reconciled. In the candidate-selection case, the question itself supplies the removed detail as an option, so its appearance in an answer also leaves retention in persistent memory unidentified. The audit preserves these flags separately from the over-forgetting errors.

A second code control examines how the evaluation script compares performance across adjacent and longitudinal settings. It groups responses by source file and question pair, then seeks an adjacent response from the same reader. If that reader is absent, the code falls back to another adjacent candidate. Given a correct adjacent response from reader A and an incorrect longitudinal response from reader B, this fallback reports a paired delta of \(-1\), mixing the horizon change with a reader change. Requiring a common reader leaves the pair unmatched. This constructed two-row control isolates the matching rule.